\documentclass{article} 
\usepackage{iclr2026_conference,times}

\usepackage{amsmath,amsfonts,bm}

\def\eqref#1{equation~\ref{#1}}

\def\1{\bm{1}}

\DeclareMathAlphabet{\mathsfit}{\encodingdefault}{\sfdefault}{m}{sl}
\SetMathAlphabet{\mathsfit}{bold}{\encodingdefault}{\sfdefault}{bx}{n}

\usepackage{caption}
\usepackage{hyperref}
\usepackage{url}
\usepackage{graphicx}
\usepackage{amsthm} 
\usepackage{amsmath} 
\usepackage{amsthm}
\usepackage{multirow}
\usepackage{array}
\usepackage[most]{tcolorbox}
\usepackage{booktabs}
\usepackage{xcolor}
\usepackage{subcaption}
\usepackage{makecell} 
\usepackage[ruled,vlined,linesnumbered]{algorithm2e}

\SetAlgoCaptionSeparator{} 
\newtheorem{theorem}{Theorem}

\newtheorem{assumption}{Assumption}

\usepackage{enumitem}
\usepackage{xcolor}

\title{Learn More with Less: Uncertainty Consistency Guided Query Selection for RLVR }

\iclrfinalcopy
\author{
  Hao Yi$^{1,2}$, 
 Yulan Hu$^{2}$\thanks{Corresponding author.},
 Xin Li$^{2}$,
 Sheng Ouyang$^{1,2}$ ,
 Lizhong Ding$^{3}$,
 Yong Liu$^{1*}$ \\
  $^1$ Renmin University of China, Gaoling School of Artificial Intelligence, Beijing \\ 
  $^2$ Amap, Alibaba Group \\
  $^3$ School of Computer Science \& Technology, Beijing Institute of Technology \\
  \texttt{yihao}@ruc.edu.cn
}
\begin{document}

\maketitle

\begin{abstract}

Large Language Models (LLMs) have recently improved mathematical reasoning through Reinforcement Learning with Verifiable Reward (RLVR). However, existing RLVR algorithms require large query budgets, making annotation costly. We investigate whether fewer but more informative queries can yield similar or superior performance, introducing active learning (AL) into RLVR. We identify that classic AL sampling strategies fail to outperform random selection in this setting,  due to ignoring \textbf{objective uncertainty} when only selecting by subjective uncertainty. This work proposes an \textbf{uncertainty consistency} metric to evaluate how well subjective uncertainty aligns with objective uncertainty. In the offline setting, this alignment is measured using the Point-Biserial Correlation Coefficient (PBC). For online training, because of limited sampling and dynamically shifting output distributions, PBC estimation is difficult. Therefore, we introduce a new online variant, computed from normalized advantage and subjective uncertainty.  Theoretically, we prove that the online variant is strictly negatively correlated with offline PBC and supports better sample selection. Experiments show our method consistently outperforms random and classic AL baselines, achieving full-dataset performance while training on only 30\% of the data, effectively reducing the cost of RLVR for reasoning tasks.\footnote{The code is available at \hyperref[https://github.com/yihao-123/uncertainty-consistency]{https://github.com/yihao-123/uncertainty-consistency}.
}
\end{abstract}

\section{Introduction}\label{sec:intro}
Large Language Models (LLMs)~\citep{deepseek-r1,gemini,gpt4,qwen2_5_math} have recently advanced complex mathematical reasoning. A key driver is Reinforcement Learning with Verifiable Reward (RLVR)~\citep{deepseekmath}, which leverages explicit, verifiable rewards in math tasks (correct vs. incorrect). This property allows us to obviate the need for learned reward models and critics commonly used in actor-critic methods. Instead, group-based normalized rewards are used to estimate sequence-level advantages, as implemented in methods such as GRPO~\citep{deepseekmath}, DAPO~\citep{dapo}, REINFORCE++~\citep{reinforce++}, and RLOO~\citep{rloo}.

However, these algorithms typically require tens of thousands of queries to reach optimal performance, while annotating answers for mathematical reasoning tasks incurs substantial cost. Moreover, previous RL research has overlooked the influence of query selection on reasoning ability. Poor query selection can bias models, induce entropy collapse, gradient instability, and hinder convergence~\citep{dapo,entropy_collapse_1,entropy,entropy_collapse_2}. So this leads to a pivotal question: \textit{Can we achieve comparable or superior performance with fewer but more informative queries in the RL reasoning task?} Active learning (AL) ~\citep{traditional_uncertainty} offers a promising approach.

In classic AL, a model selects a budgeted subset of unlabeled queries for annotation based on uncertainty~\citep{batch_al,traditional_uncertainty,traditional_uncertainty_1} or feature-space coverage\citep{badge,cdal,core-set}. The goal is to label the most valuable examples, thus improving generalization with minimal annotations. Similar ideas appear in LLMs for in-context learning~\citep{active_prompting,cal}, preference alignment~\citep{al_pl,al_preference}, supervised fine-tuning~\citep{activellm}, and knowledge distillation\citep{al_kd}. Therefore, we assess classic AL strategies in the offline reasoning RL setting.  We first conduct a pilot experiment with Qwen2.5-0.5B~\citep{qwen2_5_tech_report} on the MATH dataset~\citep{math_data}, as shown in Table~\ref{tab:pre-exp}. The results clearly show that classic AL strategies fail to improve performance in our setting. This raises an important question: \textit{why do traditional AL methods fail here?} To investigate this, we perform an additional analysis focusing on the relationship between subjective uncertainty (e.g., perplexity  estimated by the model) and objective uncertainty (i.e., rule-based reward), shown in Figure~\ref{fig:gradient_norm}. Consistency samples (orange curve in Figure~\ref{fig:gradient_norm}) are those which subjective and objective uncertainty are simultaneously high or low. However, Figure~\ref{fig:gradient_norm} reveals that samples with high subjective uncertainty but correct answers produce extreme policy gradients. These outlier gradients induce high variance, destabilizing RL training and hindering reasoning performance. In contrast, consistency samples yield more stable training dynamics. For a concrete illustration why inconsistent samples tend to induce larger variance in gradient norms, consider that in a consistent positive sample, the average generation probability of tokens might be around 0.9, whereas in an inconsistent positive sample it may be around 0.3. Since policy gradient methods tend to increase the probabilities of positive sample tokens toward 1, inconsistent samples have a larger margin for probability improvement, thereby making large‑norm and unstable gradients more likely. The same reasoning applies to negative samples. Motivated by these findings, we propose that focusing on consistency samples rather than simply those with the highest subjective uncertainty would be more beneficial for stable and effective RL training.

 In the offline setting, we measure the uncertainty consistency using the Point-Biserial Correlation Coefficient ($r_{pb}$)~\citep{pbc}, and select the top $p\%$ samples with minimal $r_{pb}$ for RL. However, estimating $r_{pb}$ becomes challenging in online settings due to limited sampling and dynamical change in output distributions~\citep{deepseek-r1}. To address this, we propose an online uncertainty consistency metric, $r_{pb}^{online}$, calculated from the normalized advantage and the current model's uncertainty. Theoretically, we prove that the online metric is strictly negatively correlated with its offline counterpart, and under mild conditions, its maximization matches optimizing sample-wise uncertainty. These results provide a solid theoretical foundation for the proposed method.

Experimental results show that in the offline setting, selecting samples with lower offline consistency metrics noteblely outperforms random selection and classic active learning strategies. In the online setting, our method achieves  performance comparable to or better than RL with the full dataset by training on only 30\% of the data. This validates our metrics' efficacy in query selection for RL reasoning. 


\begin{table}[ht]
\centering
\captionsetup{singlelinecheck=false}
\caption{ Results of different AL strategies on MATH task using GRPO on Qwen2.5-0.5B. Full denotes the use of the full dataset, while other strategies employ offline sampling with a sampling ratio of 10\%. Every experiment is conduct 5 times using different random seed. The result shows that classic AL strategies, whether based on uncertainty (PPL, Entropy~\citep{traditional_uncertainty}), features (K-means\protect\footnotemark, K-center~\citep{core-set}), or LLM-based prompting (AskLLM~\citep{askllm}), demonstrate \textbf{no marked difference} compared to the Random strategy. AskLLM prompt is shown in Appendix \ref{sec:prompt} and more experimental setup refers to Appendix \ref{sec:warmup_setup}.} 
\begin{tabular}{cccccccc}
\toprule
Q2.5-0.5B & Full  & Random  & PPL  & Entropy & K-center & K-means & AskLLM \\
\midrule
MATH & \makecell{\textbf{33.2}\\($\pm$0.1)} 
     & \makecell{31.0\\($\pm$0.2)} 
     & \makecell{30.8\\($\pm$0.6)}  
     & \makecell{29.9\\($\pm$0.3)} 
     & \makecell{31.1\\($\pm$0.3)} 
     & \makecell{30.4\\($\pm$0.5)} 
     & \makecell{30.9\\($\pm$0.7)} \\
\bottomrule
\end{tabular}

\label{tab:pre-exp}
\end{table}
\footnotetext{Featured by Qwen3-8B-Embedding~\citep{qwen3_embbeding}. Choosing the points to be labeled as the centers of k-means.}


\begin{figure}[htbp]
    \centering
    \begin{subfigure}{0.45\linewidth}
        \centering
        \includegraphics[width=\linewidth]{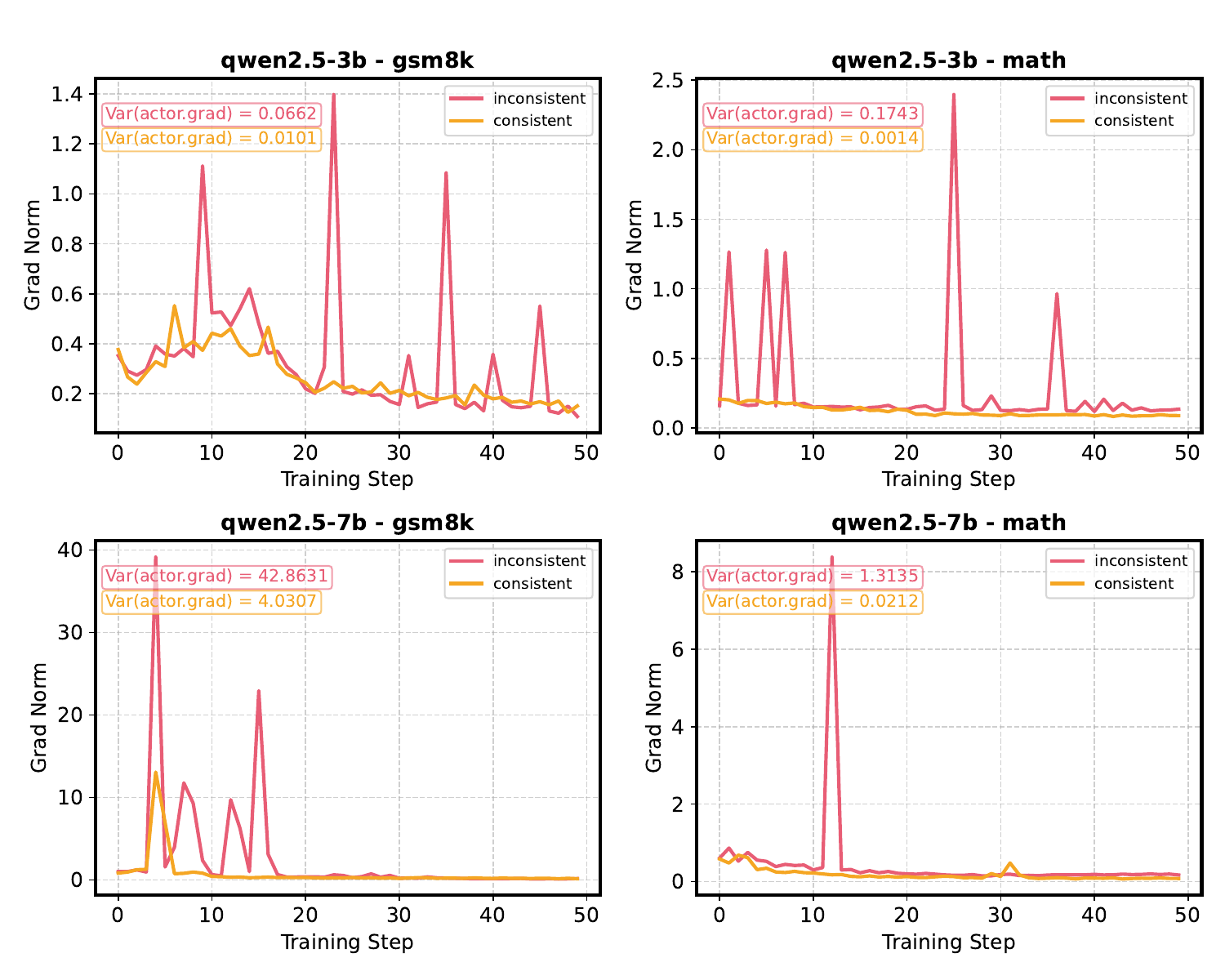}
        \caption{Gradient Norm Dynamics (Inconsistent vs consistent samples).  }
        \label{fig:gradient_norm}
    \end{subfigure}
    \hfill
    \begin{subfigure}{0.45\linewidth}
        \centering
        \includegraphics[width=\linewidth]{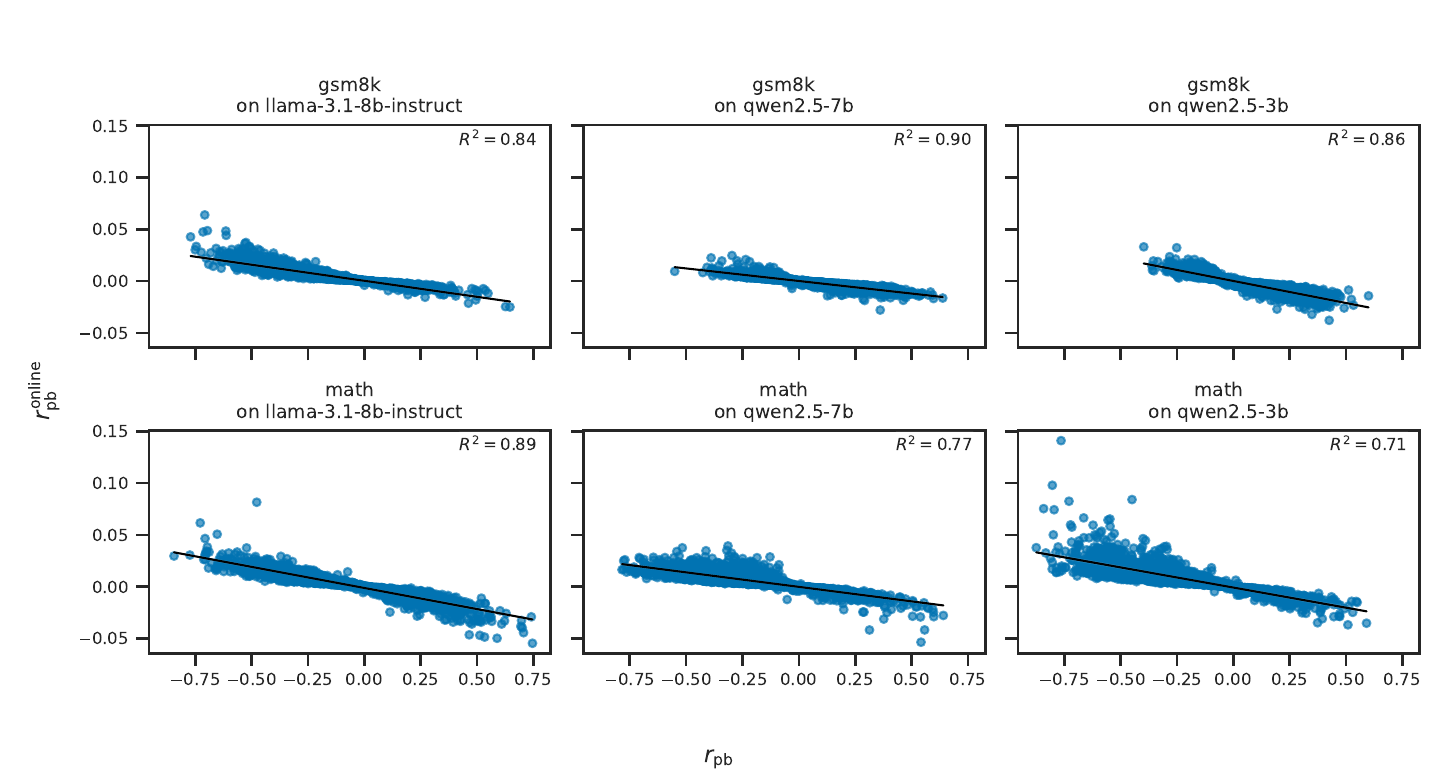}
        \caption{Correlation between $r_{pb}^{online}$ and $r_{pb}$ ($\gamma=1$). Each point corresponds to one training instance. $K=64$ responses are sampled for that instance and used to estimate $r_{pb}$ and $r_{pb}^{online}$.}
        \label{fig:online-offline-corr}
    \end{subfigure}
    \caption{(\subref{fig:gradient_norm}) Gradient norm dynamics for inconsistent vs. consistent samples. 
        (\subref{fig:online-offline-corr}) Correlation between online and offline uncertainty consistency metrics.}
    \label{fig:grad_and_corr}
\end{figure}
\section{Related Work}
\textbf{Reinforcement learning base on verifiable reward (RLVR).} Building on the noteble performance of reinforcement learning methods in aligning LLMs with human feedback, for example PPO~\citep{ppo}, ~\cite{deepseekmath} extends Reinforcement Learning base on Verifiable Reward (RLVR)  to improve LLM mathematical reasoning capabilities. GRPO~\citep{deepseekmath} dispenses with reward model and replaces it with rule-based rewards. It also removes the trainable critic model used to estimate sequence returns and advantages. Instead, it estimates policy gradients using group-wise normalized advantages, which greatly reduces GPU memory usage during RL training. RLOO~\citep{rloo} estimates the advantage with a leave-one-out method, improving sample efficiency. REINFORCE++~\citep{reinforce++} follows PPO by incorporating KL regularization into the advantage estimate, mitigating reward hacking issues that can arise with the GRPO group-wise estimation. DAPO~\citep{dapo} improves sample efficiency and reduces reward noise through empirical techniques such as clip higher, dynamic sampling, and overlong reward shaping. However, these algorithms typically require tens of thousands of queries to reach optimal performance, while annotating answers for mathematical reasoning tasks incurs substantial cost. Moreover, previous RL research has overlooked the influence of query selection on reasoning ability. Poor query selection can bias models, induce entropy collapse, gradient instability, and hinder training~\citep{dapo,entropy_collapse_1,entropy,entropy_collapse_2}.

\textbf{Active Learning (AL).} The core idea of Active Learning (AL) is: we can maximize post-training performance with a small and limited labeling cost by selecting the most informative queries. Classic AL strategies fall into two categories: uncertainty-based methods and feature-space methods. Uncertainty-based methods assume that the samples with the highest uncertainty are the most informative. Representative approaches include Least Confidence (LC), Margin Sampling (MS), and Maximum Entropy~\citep{traditional_uncertainty,traditional_uncertainty_1}. However, these methods ignore the distribution of samples in feature space, so the selected subset may lack diversity. To address this, Core-Set~\citep{core-set} and CDAL~\citep{cdal} expand the coverage of the selected subset in feature space. This increases diversity while preserving the overall information content of datasets. Although \cite{scaling-law} shows that LLMs are data-hungry during pretraining and benefit from more data, additional data can hinder optimization and limit achievable performance in many downstream tasks. In in-context learning (ICL), Activate Prompting~\citep{active_prompting} highlights inefficiencies in example selection: the most effective examples are not directly discoverable. CAL~\citep{cal} argues that datasets exhibit biased diversity and over-optimization. In supervised fine-tuning (SFT), ActiveLLM~\citep{activellm} finds that the cold-start phase requires large amounts of data, which limits its utility. In preference alignment tasks, \cite{al_pl} argues that most current online algorithms still rely on human preference labels provided as feedback to update the policy model. This reliance leads to substantial expert query costs. In knowledge distillation (KD), \cite{al_kd} shows that LLMs are easily influenced by erroneous signals, and the high cost of annotation restricts their applicability in domain-specific tasks. However, the effectiveness of AL strategies in the RL reasoning task is underexplored.

\section{Preliminary}

In this section, we first present the unified mathematical formulation of the RLVR loss in Section \ref{sec:rlvr}. In Section \ref{sec:uncertainty-est}, we introduce the method for estimating subjective uncertainty. 
\subsection{Reinforcement Learning based on Verifiable Rewards (RLVR)}\label{sec:rlvr}

RL methods epitomized by PPO~\citep{ppo} already serve in InstructGPT~\citep{instructgpt} to align LLMs with human feedback. To eliminate the memory overhead of training a critic and a reward model, \cite{deepseekmath} introduces RLVR to improve the reasoning capabilities of LLMs. In the on-policy setting, RLVR can unify into the following form:
\begin{align}
    \mathcal{L}_{\text{RLVR}}(\theta | x^{(i)}) = - \frac{1}{K}\sum_{k=1}^K \frac{1}{|y_k^{(i)}|}\sum_{t=1}^{|y_k^{(i)}|}\hat{A}_{k,t}^{(i)}\log {\pi_{\theta}(y_{k,t}^{(i)}|x^{(i)})},
    \label{equ:rlvr}
\end{align}
where $K$ denotes the sample number for each query, $\pi_{\theta}$ represents the policy model, $|y_k^{(i)}|$ is the length  of the $k^{th}$ response and $\hat{A}_{k,t}$ is the token-level advantage function. Usually, the response level advantage function is adopted. For GRPO as  an example, $\hat{A}_{k} = \hat{A}_{k,t} = \frac{R(y_{k}|x)-mean(\{R(y_{k}|x)|k\in [K]\})}{std(\{R(y_{k}|x)|k\in [K]\})}$, $R$ is the reward function. In mathematical reasoning tasks, a rule-based reward function is typically adopted:
\begin{align}
    R(c,a|x) = \begin{cases}1, & a = a^* \\ 0, & \text{otherwise} \end{cases}.
    \label{equ:reward}
\end{align}
Here, $c$, $a$, and $a^*$ represent the reasoning content, the predicted answer, and the reference answer, respectively.
\subsection{Uncertainty Estimation} \label{sec:uncertainty-est}

To estimate the subjective uncertainty of LLMs, we first use the reference model $\pi_{ref}$ to sample each sample $x^{(i)}$ for $K$ times, obtaining $K$ responses, denoted as $\{y^{(i)}_{j}\}_{j=1}^K$. For each response, we compute the PPL as follows:
\begin{align}
    \text{PPL}_{k}^{(i)} = e^{-\frac{1}{|y_k^{(i)}|}\sum_{t=0}^{|y_k^{(i)}|} \log \pi_{ref}(y_{k,t}^{(i)}|x^{(i)},y_{k,<t}^{(i)})}.
    \label{equ:ppl}
\end{align}
A larger value of $\text{PPL}_{k}^{(i)}$ indicates that the probability of $\pi_{ref}$ sampling $y_k^{(i)}$ is lower, which corresponds to a higher degree of subjective uncertainty for the sample $x$. More uncertainty estimation method (Margin Score, Entropy) can refer to Appendix \ref{sec:more_estimation}. For convenience in subsequent discussions, we use $U_{k}^{(i)}$ to uniformly denote one of the above metrics.

\section{Method}
In this section, we introduce the query selection method based on uncertainty consistency under two scenarios, offline and online. Section \ref{sec:offline} describes how to characterize uncertainty consistency in the offline setting; in Section \ref{sec:online}, we extend the concept of uncertainty consistency to the online setting, and demonstrate the connection between the offline and online metrics as well as how the online metric affects the single-step optimization process theoretically.

\begin{figure}[t]
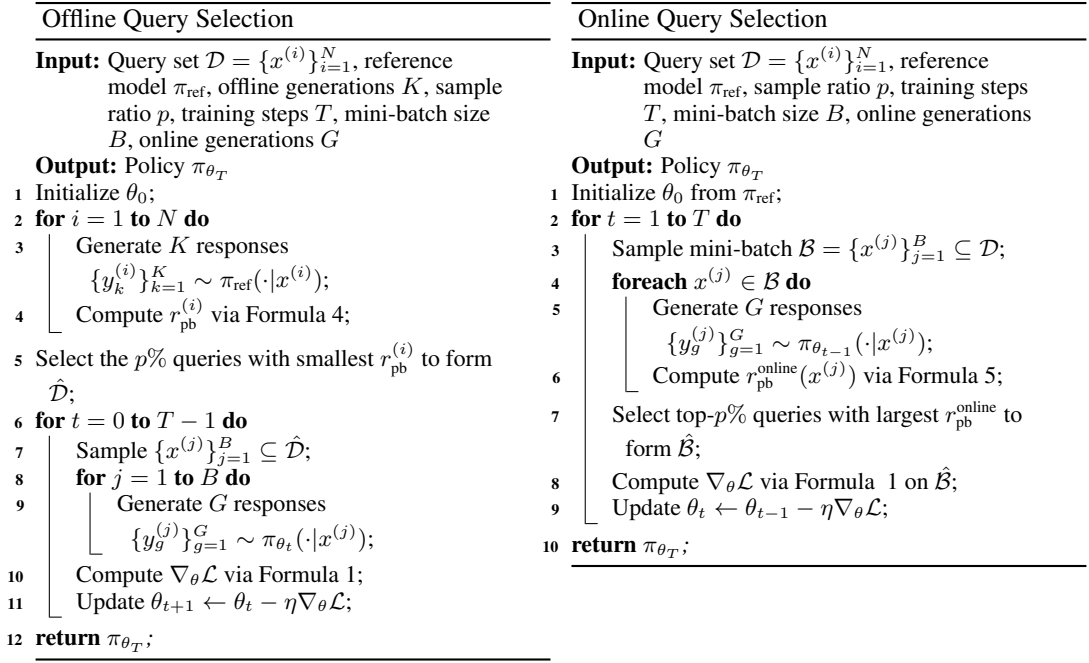

    \centering
    \begin{subfigure}[t]{0.49\textwidth}
        \begin{algorithm}[H]
            \small
            \caption{Offline Query Selection}
            \KwIn{Query set $\mathcal{D}=\{x^{(i)}\}_{i=1}^{N}$,
                  reference model $\pi_{\text{ref}}$,
                  offline generations $K$,
                  sample ratio $p$,
                  training steps $T$,
                  mini-batch size $B$,
                  online generations $G$}
            \KwOut{Policy $\pi_{\theta_{T}}$}

            Initialize $\theta_{0}$\;
            \For{$i = 1$ \KwTo $N$}{
                Generate $K$ responses $\{y_{k}^{(i)}\}_{k=1}^{K}\sim\pi_{\text{ref}}(\cdot|x^{(i)})$\;
                Compute $r_{\text{pb}}^{(i)}$ via Formula~\ref{equ:r_pb_offline}\;
            }
            Select the $p\%$ queries with smallest $r_{\text{pb}}^{(i)}$ to form $\hat{\mathcal{D}}$\;
            
            \For{$t = 0$ \KwTo $T-1$}{
                Sample $\{x^{(j)}\}_{j=1}^{B}\subseteq\hat{\mathcal{D}}$\;
                \For{$j = 1$ \KwTo $B$}{
                    Generate $G$ responses $\{y_{g}^{(j)}\}_{g=1}^{G}\sim\pi_{\theta_{t}}(\cdot|x^{(j)})$\;
                }
                Compute $\nabla_{\theta}\mathcal{L}$ via Formula~\ref{equ:rlvr}\;
                Update $\theta_{t+1}\leftarrow\theta_{t}-\eta\nabla_{\theta}\mathcal{L}$\;
            }
            \Return{$\pi_{\theta_{T}}$\;}
        \end{algorithm}
    \end{subfigure}
    \hfill
    \begin{subfigure}[t]{0.49\textwidth}
        \begin{algorithm}[H]
            \small
            \caption{Online Query Selection}
            \KwIn{Query set $\mathcal{D}=\{x^{(i)}\}_{i=1}^{N}$,
                  reference model $\pi_{\text{ref}}$,
                  sample ratio $p$,
                  training steps $T$,
                  mini-batch size $B$,
                  online generations $G$}
            \KwOut{Policy $\pi_{\theta_{T}}$}

            Initialize $\theta_{0}$ from $\pi_{\text{ref}}$\;

            \For{$t = 1$ \KwTo $T$}{
                Sample mini-batch $\mathcal{B}=\{x^{(j)}\}_{j=1}^{B}\subseteq\mathcal{D}$\;

                \ForEach{$x^{(j)}\in\mathcal{B}$}{
                    Generate $G$ responses $\{y_{g}^{(j)}\}_{g=1}^{G}\sim\pi_{\theta_{t-1}}(\cdot|x^{(j)})$\;
                    Compute $r_{\text{pb}}^{\text{online}}(x^{(j)})$ via Formula~\ref{equ:online};
                }

                Select top-$p\%$ queries with largest $r_{\text{pb}}^{\text{online}}$ to form $\hat{\mathcal{B}}$\;

                Compute $\nabla_{\theta}\mathcal{L}$ via Formula ~\ref{equ:rlvr} on $\hat{\mathcal{B}}$\;
                Update $\theta_{t}\leftarrow\theta_{t-1}-\eta\nabla_{\theta}\mathcal{L}$\;
            }
            \Return{$\pi_{\theta_{T}}$\;}
        \end{algorithm}
    \end{subfigure}
    \caption{Offline (left) and online (right) query selection procedures.}
\label{alg}
\end{figure}

\subsection{Uncertainty Consistency in offline scenarios}\label{sec:offline}

In Section \ref{sec:intro} , we have shown that, in mathematical reasoning tasks within RL scenarios, simply applying classic AL strategies, such as selecting samples where the LLM exhibits the highest subjective uncertainty for training, does not lead to better enhancement of reasoning ability compared to randomly choosing training samples. We argue that the key reason for this ineffectiveness lies in the neglect of the relationship between objective and subjective uncertainty. 

In Equation \ref{equ:reward}, this Bernoulli variable reflects the degree of objective uncertainty: a lower accuracy rate over $K$ samples indicates higher objective uncertainty. Through our experiments, we found that samples with higher alignment between subjective and objective uncertainty tend to be more valuable for  the RL reasoning task. Since the reward is a binary variable, for a training sample $x$, we use the PBC to characterize the relationship between subjective uncertainty $U$ (Equation \ref{equ:ppl}) and objective uncertainty $R$ (Equation \ref{equ:reward}):
\begin{align}
    r_{pb}(x^{(i)};\{y^{(i)}_{j}\}_{j=1}^K) = \frac{\bar{U}_1-\bar{U}_0}{s_K} \sqrt{\frac{K_0 K_1}{K^2}}.
    \label{equ:r_pb_offline}
\end{align}
Here, $\bar{U}_1$ denotes the mean subjective uncertainty for correct responses for $x^{(i)}$, $\bar{U}_0$ for incorrect responses, $s_K$ is the standard deviation of $\{U_{k}^{(i)}\}_{k=1}^K$, and $K_1$ and $K_0$ are the numbers of correct and incorrect responses, respectively. A more negative value of $r_{pb}$ indicates a stronger negative correlation between the two variables, and vice versa. Therefore, when subjective and objective uncertainty are aligned, $U$ and $R$ should exhibit a negative correlation, 
i.e., $r_{pb}$ should be as small as possible. In the offline setting, we pre-select the top $p\%$ of samples with the smallest $r_{pb}$ values from the training set as the dataset for RL. The detailed procedure is shown in Figure \ref{alg} (left).
\subsection{Uncertainty Consistency in online scenarios}\label{sec:online}

In the previous section, we propose an offline uncertainty consistency metric (Equation \ref{equ:r_pb_offline}). However, this metric presents the following challenges in online RL settings:
\begin{itemize}[itemsep=-0.2em, topsep=-0.2em]
    \item \textbf{Estimation accuracy.} Because the calculation of $r_{pb}$ relies on a large number of samples $K$, it is not feasible to estimate this value accurately in practice;
    \item \textbf{Model update}. The calculation of subjective uncertainty $U$ depends on probability outputs of the model, and in online settings, the sampling distribution of the behavioral policy is changed as the model is updated.
\end{itemize}
Therefore, we propose an equivalent online uncertainty consistency metric that can mitigate the above issues:
\begin{align}
    r_{pb}^{online}(x^{(i)};\{y_j^{(i)}\}_{k=1}^K) = \frac{1}{K}\left(\sum_{A_j>0} \frac{\hat{A}_j}{U_j^{\theta}} + \gamma\sum_{A_j<0} \frac{\hat{A}_j}{U_j^{\theta}}\right).
    \label{equ:online}
\end{align}
Here, $\hat{A}_j$ is the advantage estimate assigned by the RL algorithm (e.g., GRPO) for the sample $(x^{(i)},y^{(i)}_j)$,  $U_j^{\theta}$ is the current model $\pi_\theta$’s estimation of the uncertainty metric (see Equation \ref{equ:ppl}) and $\gamma>0$ is a hyperparameter to balance the ratio between positive response and negative responses.


Based on our experiments (Figure \ref{fig:online-offline-corr}) and theoretical analysis Theorem \ref{theroem:neg_cov}, we find the negative correlation between $r_{pb}$ and $r^{online}_{pb}$. Figure \ref{fig:online-offline-corr} shows the relationship between the offline metric $r_{pb}$ and the online metric $r_{pb}^{online}$($\gamma$=1). There is a noteble negative correlation between $r_{pb}^{online}$ and $r_{pb}$ in six combinations of models and datasets. Theorem \ref{theroem:neg_cov} shows that the covariance between $r_{pb}$ and $r^{online}_{pb}$ is strictly negative. Therefore, to select the most valuable training sample, we choose the top $p\%$ of samples with the largest $r_{pb}^{online}$  from the minibatch at every step during the online training, which means the highest subjective and objective uncertainty consistency. The detailed procedure refers to Figure \ref{alg} (right).
\begin{theorem}[Negative Correlation between $r_{pb}$ and $r_{pb}^{\text{online}}$]
   For the same model $\pi_{\theta}$, the covariance between $r_{pb}$ and $r_{pb}^{\text{online}}$ is less than zero, i.e., $\mathrm{Cov}(r_{pb}, r_{pb}^{\text{online}}) < 0$.
   \label{theroem:neg_cov}
\end{theorem}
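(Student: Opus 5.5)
Since the statement is informal about which randomness the covariance is over, I will fix it as follows. For a fixed query $x$ and the fixed model $\pi_\theta$, draw $K$ i.i.d. responses. Each response carries a binary reward $R_k\in\{0,1\}$ (Equation~\ref{equ:reward}) and a subjective uncertainty $U_k>0$ (Equation~\ref{equ:ppl}, evaluated with the same $\pi_\theta$, so $U_k^\theta=U_k$). The covariance is taken over this sampling, and both $r_{pb}$ and $r_{pb}^{online}$ are computed from the same sample.

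\textbf{Rewriting both metrics in group statistics.} Write $K_1,K_0$ for the group sizes and $\mu=K_1/K$. For GRPO,
\[
\hat A_k=\frac{R_k-\mu}{\sigma_R},\qquad \sigma_R=\sqrt{\mu(1-\mu)}.
\]
Correct responses therefore have $\hat A=\sqrt{(1-\mu)/\mu}$ and incorrect ones have $\hat A=-\sqrt{\mu/(1-\mu)}$. Introducing the inverse-uncertainty group means $\bar V_1,\bar V_0$ with $V_k=1/U_k$, Equation~\ref{equ:online} becomes
\[
r_{pb}^{online}=\sqrt{\mu(1-\mu)}\,\bigl(\bar V_1-\gamma\,\bar V_0\bigr).
\]
Equation~\ref{equ:r_pb_offline} becomes
\[
r_{pb}=\sqrt{\mu(1-\mu)}\,\frac{\bar U_1-\bar U_0}{s_K}.
\]
Both quantities are thus driven by the same between-group contrast in $U$, up to the monotone decreasing map $u\mapsto 1/u$.

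\textbf{Conditioning on the reward split.} I would write $\mathrm{Cov}(r_{pb},r_{pb}^{online})$ via the law of total covariance, conditioning on the reward vector (equivalently on $K_1$). I would then parametrise the dependence of $U$ on $R$ by a location model: conditional on $R_k=r$, $U_k=m_r+\varepsilon_k$ with $\varepsilon_k$ i.i.d., positive-supported, and with a common spread. I would treat $\Delta=m_1-m_0$ as the quantity along which queries vary. Under this model, $r_{pb}$ is, to first order, increasing in $\Delta$ via $\bar U_1-\bar U_0$. The online metric, by contrast, is decreasing in $\Delta$: $\bar V_1\approx 1/m_1$ falls as $m_1$ rises, and $\gamma\bar V_0\approx\gamma/m_0$ rises as $m_0$ falls. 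Both contributions have the same sign for every $\gamma>0$. A delta-method (first-order Taylor) expansion of $1/u$ around $m_r$ then gives
\[
\mathrm{Cov}\approx -\,c\,\mathbb{E}\bigl[\mu(1-\mu)\bigr]\,\mathrm{Var}(\bar U_1-\bar U_0),
\qquad c>0,
\]
where $c$ collects $1/(s_K m_1^2)$ and $\gamma/(s_K m_0^2)$. This constant is strictly negative whenever $0<\mu<1$ with positive probability and $U$ is non-degenerate.

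\textbf{Main obstacle.} The hard step is making the sign rigorous instead of first-order. Three things complicate it: $s_K$ sits in the denominator of $r_{pb}$, $\mu$ is random, and $1/u$ is nonlinear. My first attempt would be an association (Harris/FKG) argument. Conditional on the group sizes, $r_{pb}$ is coordinatewise nondecreasing in the correct-group $U$'s and nonincreasing in the incorrect-group $U$'s, at least after freezing $s_K$, or by showing the normalised contrast keeps its monotonicity. $r_{pb}^{online}$ has exactly the opposite monotonicities. For independent coordinates, Harris's inequality then gives a nonpositive conditional covariance, strict under non-degeneracy.

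The conditional-mean term of the total covariance also needs control. If it cannot be shown nonpositive, the fallback is to state the theorem under the assumption that $\mathrm{Var}(\mu)$ is small, or $\mu$ fixed (large $K$). The $\mathbb{E}[\mu(1-\mu)]$-weighted within-group term then dominates, and strict negativity follows.
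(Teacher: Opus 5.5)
Your rigorous core is correct: condition on the reward split, replace $r_{pb}$ by the unnormalised $r'_{pb}=\sqrt{\mu(1-\mu)}\,(\bar U_1-\bar U_0)$, and use the opposite coordinatewise monotonicity of the two statistics in the conditionally independent $U_k$. It establishes exactly what the paper's proof establishes, but through a different lemma.

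The paper also drops $s_K$. It treats the coefficients in $P=\sum_i c_iu_i$, $Q=\sum_i d_i/u_i$ (with $c_id_i>0$) as constants, which amounts to conditioning on the labels. It then expands $\mathbb{E}[PQ]$ directly: independence kills every cross term, leaving $\mathrm{Cov}(P,Q)=\sum_i c_id_i\bigl(1-\mathbb{E}[u_i]\,\mathbb{E}[1/u_i]\bigr)<0$ by Cauchy--Schwarz. In your model the law of $U$ may depend on $R$, which is more faithful than the paper's single i.i.d.\ law. There the same computation gives the exact identity
\[
\mathrm{Cov}\bigl(r'_{pb},r_{pb}^{online}\mid K_1\bigr)=\mu(1-\mu)\Bigl[\tfrac{1}{K_1}\mathrm{Cov}(U,U^{-1}\mid R=1)+\tfrac{\gamma}{K_0}\mathrm{Cov}(U,U^{-1}\mid R=0)\Bigr].
\]
Once $s_K$ is frozen, both statistics are additively separable, so Harris/FKG collapses to the one-variable inequality $\mathrm{Cov}(U,U^{-1})\le 0$. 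The explicit expansion is shorter and needs no delta method. It also shows that strictness requires $U$ to be non-degenerate \emph{within} some reward group; if $U$ is a deterministic function of $R$ the conditional covariance vanishes. That is more than your ``$U$ non-degenerate''. Harris buys only generality to other monotone statistics.

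You can drop the delta-method paragraph. It holds $s_K$ fixed although $s_K^2$ contains the term $\mu(1-\mu)(\bar U_1-\bar U_0)^2$. It also mixes across-query variation in $\Delta$ with the within-query sampling you fixed at the outset.

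On the obstacles you flag, you are more careful than the paper. Its remark that $s_K>0$ preserves the sign of the covariance is not a valid argument, because $s_K$ is random and built from the same $U_k$. But neither of your hoped-for repairs goes through.

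First, the normalised $r_{pb}$ is a Pearson correlation and is not coordinatewise monotone. With $K_1\ge 2$, letting a single correct-group $U_k\to\infty$ sends $r_{pb}$ to $\sqrt{K_0/((K-1)K_1)}<1$, so it must decrease somewhere if it started near $1$.

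Second, the between-split term of the total covariance cannot be shown nonpositive. For $r'_{pb}$ it equals exactly
\[
\mathrm{Var}\bigl(\sqrt{\mu(1-\mu)}\bigr)\bigl(\mathbb{E}[U\mid R=1]-\mathbb{E}[U\mid R=0]\bigr)\bigl(\mathbb{E}[U^{-1}\mid R=1]-\gamma\,\mathbb{E}[U^{-1}\mid R=0]\bigr).
\]
For $K\ge 4$ this is positive when correct responses have larger mean uncertainty and $\gamma$ is small. It dominates when $U$ varies little within each reward group. Large $K$ does not rescue this: unless $P(R=1)=1/2$, this term and the within-split term are both of order $1/K$.

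So conditioning on the split and working with $r'_{pb}$ is not a fallback but the actual scope of the result. It is also the true scope of the paper's argument, and you should state the theorem that way.
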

The proof is  shown in Appendix \ref{sec:neg_cov_proof}. Furthermore, to explain why the largest $r_{pb}^{online}$ samples are effective in the RL reasoning task, we have proven Theorem \ref{theorem:equivalent}:
\begin{theorem}[Equivalent between Maximizing Decrease in Sample Uncertainty and Maximizing $r_{pb}^{online}$] 
Suppose  $U(x;\theta) = \sum_{j=1}^K U_j^{\theta}$ denotes the subjective uncertainty for sample $x$. Under  sample gradient orthogonality assumption (Assumption $\ref{ass_1}$) and bounded gradient norm assumption (Assumption $\ref{ass_2}$), in one optimization step of an on-policy RL algorithm (e.g., GRPO), selecting samples in the minibatch with largest $r_{pb}^{online}$ can maximize the decrease in sample uncertainty .

\label{theorem:equivalent}

\end{theorem}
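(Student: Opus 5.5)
We plan a first-order analysis of a single on-policy update. Let $\theta' = \theta - \eta \nabla_\theta \mathcal{L}$, where the loss is Equation~\ref{equ:rlvr} summed over the selected subset $\hat{\mathcal{B}}$ of the minibatch. A Taylor expansion of the total uncertainty $U(x;\theta)=\sum_j U_j^{\theta}$ gives
\begin{align*}
U(x;\theta') - U(x;\theta) = -\eta \,\bigl\langle \nabla_\theta U(x;\theta), \nabla_\theta \mathcal{L}\bigr\rangle + O(\eta^2),
\end{align*}
and Assumption~\ref{ass_2} (bounded gradient norms) lets us absorb the remainder for small $\eta$. We then expand $\nabla_\theta \mathcal{L} = \sum_{x'\in\hat{\mathcal{B}}} \nabla_\theta \mathcal{L}(\theta|x')$ and invoke Assumption~\ref{ass_1} (sample gradient orthogonality). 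We read it as: gradients associated with distinct queries, and with distinct responses of the same query, are orthogonal and have comparable norms. Under this reading, all cross terms vanish, so the decrease in $U(x;\theta)$ for each selected $x$ comes only from its own loss term. The total decrease over $\hat{\mathcal{B}}$ is then additive across queries.

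The core computation is the per-response term. With PPL as $U$ (Equation~\ref{equ:ppl}), write $\ell_j = \frac{1}{|y_j|}\sum_t \log\pi_\theta(y_{j,t}|x,y_{j,<t})$, so $U_j^\theta = e^{-\ell_j}$ and $\nabla_\theta U_j^\theta = -U_j^\theta \nabla_\theta \ell_j$. The loss gradient is $-\frac{1}{K}\sum_k \hat{A}_k \nabla_\theta \ell_k$, so orthogonality across responses leaves
\begin{align*}
\Delta U(x) \approx -\frac{\eta}{K}\sum_j \hat{A}_j\, U_j^\theta\, \|\nabla_\theta \ell_j\|^2 .
\end{align*}
The decrease is therefore proportional to an advantage-weighted sum over responses. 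To connect it to Equation~\ref{equ:online}, we need the weight to scale like $1/U_j^\theta$. We would obtain this by converting the bound on the gradient norm of the log-probability into a bound on the gradient of $U$ itself. Concretely, the step is normalised in the PPL parametrisation: the assumption is stated on $\|\nabla_\theta U_j^{\theta}\|$, and by the chain rule $\|\nabla_\theta \ell_j\|^2 \approx c/(U_j^{\theta})^2$. The per-response contribution then becomes $\hat{A}_j/U_j^\theta$ up to a common constant $c\eta$. The asymmetry between positive and negative responses, which is absorbed in the bound constants, is what produces the factor $\gamma$ on the $A_j<0$ terms. This yields $-\Delta U(x) \approx c\,\eta\, r_{pb}^{online}(x)$.

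Finally, the total decrease $\sum_{x\in\hat{\mathcal{B}}} -\Delta U(x) = c\eta\sum_{x\in\hat{\mathcal{B}}} r_{pb}^{online}(x)$ is additive and has a fixed budget $|\hat{\mathcal{B}}|=p\%\cdot B$. Choosing the top-$p\%$ queries by $r_{pb}^{online}$ therefore maximises it, by a simple exchange argument. The main obstacle is the middle step: rigorously extracting the $1/U_j^\theta$ weighting and the role of $\gamma$ from the assumptions, rather than the naive $U_j^\theta\|\nabla\ell_j\|^2$ weighting. I expect to state Assumption~\ref{ass_2} in exactly the form needed there, with two-sided norm bounds that become tight as $\eta\to0$, and to treat the sign convention carefully. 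We must also check that the GRPO normalisation of $\hat{A}_j$ does not reintroduce cross-query coupling.
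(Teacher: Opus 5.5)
Your skeleton is the paper's: a first-order Taylor step, Assumption~\ref{ass_1} removing the cross terms, and an advantage-weighted sum over responses. Your computation is correct. However, the step you flag as the main obstacle is not one, and the step that actually produces $r_{pb}^{online}$ with its $\gamma$ is missing.

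The identity $\nabla_\theta U_j^\theta=-U_j^\theta\nabla_\theta\ell_j$ holds exactly, so $U_j^\theta\|\nabla_\theta\ell_j\|_2^2=\|\nabla_\theta U_j^\theta\|_2^2/U_j^\theta$. Your expression therefore already reads $\Delta U(x)\approx-\frac{\eta}{K}\sum_j\frac{\hat A_j}{U_j^\theta}\|\nabla_\theta U_j^\theta\|_2^2$. This is precisely the paper's line, which it obtains by rewriting the loss as $\frac{1}{K}\sum_i\hat A_i\ln U_i^\theta$. The $1/U_j^\theta$ weight comes for free, there is no ``naive'' weighting to convert, and Assumption~\ref{ass_2} needs no restatement.

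The real gap is how $\gamma$ enters. You replace $\|\nabla_\theta U_j^\theta\|_2^2$ by a common constant, via two-sided bounds that ``become tight''. Note that $m,M$ do not depend on $\eta$, so tightness just means $m=M$. That gives $-\Delta U(x)\approx c\eta\,r_{pb}^{online}(x)$ only with $\gamma=1$. When $m<M$ the per-response norms are not pinned down, so no choice of $\gamma$ makes the decrease a function of $r_{pb}^{online}$. ``Absorbed in the bound constants'' does not supply such a choice.

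The missing idea is a one-sided bound split by the sign of the advantage:
for $\hat A_j>0$, use $\|\nabla_\theta U_j^\theta\|_2^2>m^2$;
for $\hat A_j<0$, use $\|\nabla_\theta U_j^\theta\|_2^2<M^2$, so that $\frac{\hat A_j}{U_j^\theta}\|\nabla_\theta U_j^\theta\|_2^2>M^2\frac{\hat A_j}{U_j^\theta}$;
then set $\gamma=M^2/m^2$.
This gives $\Delta U(x)\le-\eta m^2\,r_{pb}^{online}(x)$, so $r_{pb}^{online}$ controls a guaranteed first-order decrease. That is all the paper's argument establishes; its final ``equivalence'' is a reading of this bound. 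Your exchange argument over $\hat{\mathcal{B}}$ should likewise be read as maximizing this lower bound, not the decrease itself.

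Two smaller points. First, Assumption~\ref{ass_2} bounds only first derivatives, so it cannot absorb the $O(\eta^2)$ remainder; that needs a curvature bound, and the paper simply works to first order. Second, orthogonality across distinct queries is an extra hypothesis beyond Assumption~\ref{ass_1}. The paper sidesteps it by analyzing each query's own update in isolation, and since GRPO normalizes advantages per query, they introduce no cross-query coupling.
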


\begin{assumption}[Sample Gradient Orthogonality]
For any $i,j\in [K]$ and $i\neq j$, the derivatives of the sample uncertainties with respect to the parameter $\theta$ are orthogonal, namely $<\nabla_{\theta}U_i^{\theta},\nabla_{\theta}U_j^{\theta}>=0$.
\label{ass_1}
\end{assumption}

\begin{assumption}[Bounded Gradient Norm]
For any $i \in [K]$, we have $0<m<||\nabla_{\theta}U_i^{\theta}||_2<M$.
\label{ass_2}
\end{assumption}

The explanation of the assumption and the proof is shown in Appendix \ref{sec:equivalent_proof}.

\section{Experiment}
\subsection{SetUp}\label{sec:exp_setup}
\textbf{Model}.
We select the following models to evaluate the effectiveness of uncertainty consistency metric in both offline and online settings. These models cover different sizes, different architectures, and include both Pretrained and Instruct versions:  1) Qwen2.5-7B (Q-7B)~\citep{qwen2_5_tech_report} 2) Qwen2.5-3B (Q-3B)~\citep{qwen2_5_tech_report} 3) Llama-3.1-8B-Instruct (L-8B-I)~\citep{llama3}.

\textbf{Dataset}.
We choose two mathematical reasoning tasks, MATH and GSM8K. MATH consists of 7,500 training examples and 5,000 test examples, while GSM8K consists of 7,474 training examples and 1,319 test examples. All evaluations focus on generalization within each task, without considering cross-task generalization. In order to utilize the CoT ability, we use CoT prompt shown in Appendix \ref{sec:prompt}.

\textbf{Baseline \& Metric}.
In the offline setting, we include the following baselines for comparison:  1) \textbf{Full}: Using the entire training dataset. 
2) \textbf{Random}: Randomly selecting p\% of the data.  
3) \textbf{PPL \& ENT}: For each query, generating $K=64$ inferences, then selecting the top p\% of examples based on the highest PPL (Equation \ref{equ:ppl}) or ENT (Equation \ref{equ:entropy}) scores.  
4) \textbf{K-center}~\citep{core-set}: Utilizing Qwen3-8B-Embedding~\citep{qwen3_embbeding} to extract language features from training queries, and then applying the K-center algorithm to select p\% of the data.  
5) \textbf{AskLLM} ~\citep{askllm}: The ask prompt we used for LLM is shown in Appendix \ref{sec:prompt}. 6) \textbf{Active Prompt}~\citep{active_prompting} 7) \textbf{CAL} ~\citep{cal}.

In the online setting, Random, PPL, and ENT are performed within each minibatch. All other baselines remain the same as in the offline setting.   All experiments report the greedy decoding Pass@1 score on the test set.

\textbf{RL Algorithm Hyperparameter.}
We mainly use GRPO to validate the effectiveness of our method. Training is conducted with sampling temperature set to 1.0, maximum response length of 2048, and $K=8$. $\gamma$ in Equation \ref{equ:online} is selected from the set $\{0.1,0.5,1.0,1.5,2\}$. AdamW~\citep{adamw} is used as the optimizer with a constant learning rate of 1e-6, and k3 KL divergence regularization with a coefficient of 0.001. The batch size is set to 256, and total training steps are fixed at 50. In Section \ref{sec:diff-rl}, we further discuss the effects of different RLVR algorithms. The hyperparameters for RLOO and REINFORCE++ are identical to those of GRPO. As for DAPO, we set the high clip ratio and low clip ratio to 0.28 and 0.20, respectively, with a long buffer length of 1024 and a long penalty coefficient of 1.0; Other hyperparameters are the same as GRPO.  All experiments are run on 8 × 96GB NVIDIA H20 GPUs. 

\subsection{Main Result}

\begin{table}[ht]
\centering
\caption{ Main result in offline and online setting.  }
\begin{tabular}{>{\raggedright\arraybackslash}p{1cm}ccccc}
\toprule
Model & Method & \multicolumn{2}{c}{GSM8K} & \multicolumn{2}{c}{MATH} \\
\cmidrule(lr){3-4} \cmidrule(lr){5-6}
 & & OFF(p=30) & ON(p=30) & OFF(p=30) & ON(p=30) \\ \midrule
\multirow{6}{*}{Q-7B}
    & Full      &  91.5  &  \textbf{91.5} &   73.2  &   \textbf{73.2}    \\
    & Random    &  88.6   &   88.1  &   70.8   &  68.2   \\
    & PPL       &   88.9    &   90.4   &   71.0  &   72.1   \\
    & ENT &  88.4  &    90.3   &   70.3   &  71.8    \\
    & K-center  &  88.1    &    -   &    70.5  &    -  \\
    & AskLLM    & 87.8 &    -   &  69.8  &    -  \\
    & Active Prompt & 85.2 & - & 65.1 &  - \\
    & ZS-CAL & 84.3 & - & 64.0 &  - \\
    & $r_{pb}$\textbf{(Ours)}  &   90.1\textcolor{blue}{(+1.5\%) }   &   -    &   72.1\textcolor{blue}{(+1.3\%) }  &     - \\
    & $r_{pb}^{online}$\textbf{(Ours)} & -   &   \textbf{91.7\textcolor{blue}{(+2.4\%)}}   &   -   &    72.9\textcolor{blue}{(+4.7\%)}  \\
\midrule

\multirow{6}{*}{Q-3B}
    & Full     &  85.2 & \textbf{85.2} & 63.8  & 63.8  \\
    & Random   &  82.4 & 81.2 & 62.2  & 58.8  \\
    & PPL      &  81.6 & 83.4 & 61.6  & 62.5  \\
    & ENT      &  82.7 & 83.2 & 62.9  & 62.8  \\
    & K-center &  83.0 &    - & 62.5  &    -  \\
    & AskLLM  &  82.8  &    - &  61.9 &    -  \\
    & Active Prompt & 80.5 & - & 54.1 &  -\\
    & ZS-CAL & 79.2 & - & 54.5 &  - \\
    & $r_{pb} \textbf{(Ours)}$  &   83.6\textcolor{blue}{(+1.2\%)}     &   -    &  63.3\textcolor{blue}{(+1.1\%)}     &     - \\
    & $r_{pb}^{online}$\textbf{(Ours)}& -   &  84.9\textcolor{blue}{(+2.5\%)}   &   -   &   \textbf{64.0\textcolor{blue}{(+5.2\%)}}  \\
\midrule
\multirow{6}{*}{L-8B-I}
    & Full      &   90.2   &  \textbf{90.2}    &   52.0   &   52.0   \\
    & Random    &   87.0    &  88.0    &   50.9   &   50.6  \\
    & PPL       &   86.5   &   90.5  &   49.8   &   51.6   \\
    & ENT       &   87.0   &    88.7   &  50.7    &   51.3  \\
    & K-center  &   87.3    &    -   &   51.0   &    -  \\
    & AskLLM    &    86.2   &    -   &  50.5    &    -  \\
    & Active Prompt & 84.1 & - & 49.5 &  - \\
    & ZS-CAL & 84.5 & - & 48.6 &  - \\
    & $r_{pb}$\textbf{(Ours)}  &   88.7\textcolor{blue}{(+1.7\%)}   &   -    &  51.5 \textcolor{blue}{(+0.6\%)}  &     - \\
    & $r_{pb}^{online}$\textbf{(Ours)}& -   &  89.9 \textcolor{blue}{(+2.9\%) } &   -   &   \textbf{52.5} \textcolor{blue}{(+1.9\%)}  \\
\bottomrule
\end{tabular}
\label{tab:on-offline-res}
\end{table}

The main experimental results are reported in Table \ref{tab:on-offline-res}. We focus on three types of models and conduct comparative studies on two mathematical datasets under both online and offline settings. In all experiments, the sampling ratio is consistently set to 30\%.

\textbf{Offline.} In the offline setting, classic uncertainty metrics (PPL, ENT), feature-based methods (K-center), and the prompt-based uncertainty method (AskLLM) perform similarly to random selection and are noteblely worse than using the full dataset. These findings are consistent with our preliminary experiments in the warm-up stage. Applying AL strategies in ICL provides only limited improvements to model reasoning ability. In contrast, directly selecting the 30\% of samples with the lowest $r_{pb}$ yields much better performance than random selection, although it still falls short of the results obtained with the full dataset. This gap can be attributed to estimation bias caused by the changes in model output distribution during RL parameter updates.

\textbf{Online.} In the online setting, selecting samples with high PPL and ENT values during RL leads to better reasoning performance than random selection, but does not reach the performance achieved with Full. However, dynamically selecting the top 30\% samples with the highest $r_{pb}^{online}$ in each minibatch not only noteblely outperforms random selection, but also closely matches the performance of Full. In some cases, such as Qwen2.5-7B on GSM8K and Qwen2.5-3B on MATH, it even surpasses Full by 0.2\%. These results indicate that in RL-based reasoning training, using only 30\% of the data is sufficient to achieve nearly the same performance as Full. Samples with consistent subjective and objective uncertainty are more valuable, supporting both our empirical observations and theoretical hypotheses.

\subsection{Ablation Study}

To further evaluate the validity and robustness of our method, we conducted additional experiments using Qwen2.5-7B.

\textbf{Choosing Top p\% or Bottom p\% $r_{pb}$ samples.}
To investigate whether samples with consistent subjective and objective uncertainty are more valuable for RL reasoning training, we selected the bottom 30\% (consistency samples, lowest $r_{pb}$) and top 30\% (inconsistency samples, highest $r_{pb}$) in the offline setting. We then evaluate model performance on the test set throughout training, as shown in Figures \ref{fig:ablation_study} (1-a) \& (1-b). The results indicate that, for both MATH and GSM8K, using consistency samples consistently outperforms using inconsistency samples. Moreover, training with inconsistency samples even performs worse than randomly selecting the same number of samples. This suggests that inconsistency samples may be detrimental to model reasoning in RL.

\begin{figure}[ht]
    \centering
    \includegraphics[width=0.99\linewidth]{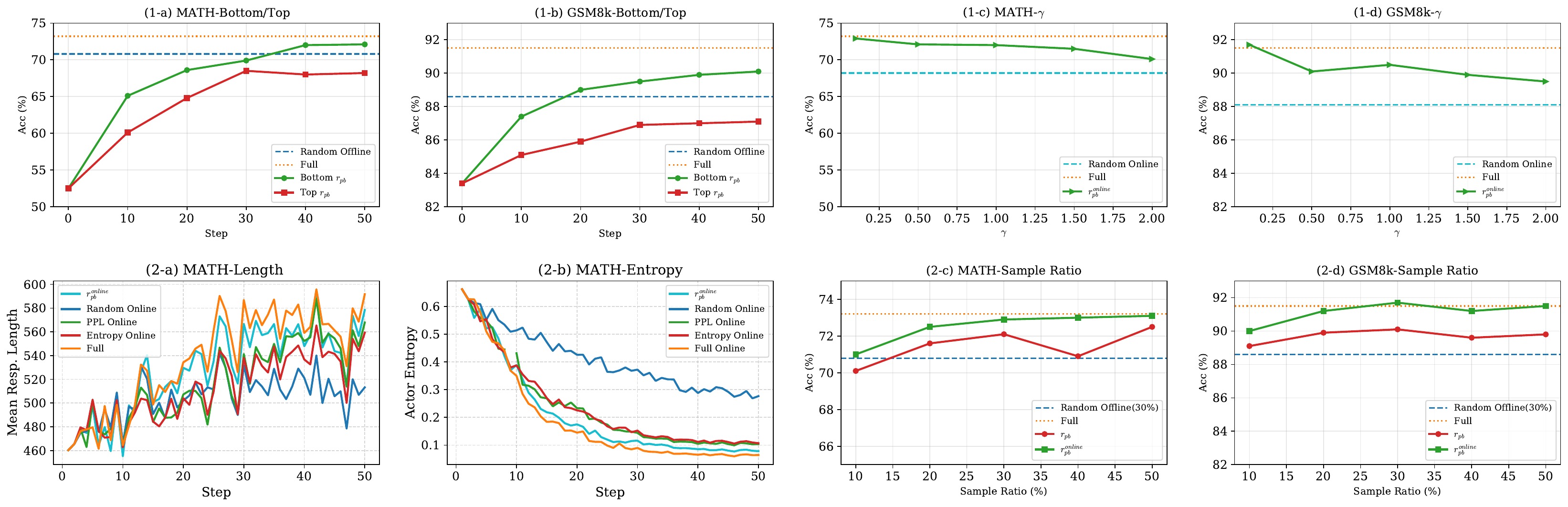}
    \caption{Ablation study and discussion on Qwen2.5-7B: \textbf{(1-a)\&(1-b)} Choosing Top 30\% or Bottom 30\% $r_{pb}$ samples. \textbf{(1-c)\&(1-d)} Sensitivity of $\gamma$ in Equation \ref{equ:online}. \textbf{(2-a)\&(2-b)} Responses length and entropy during online training. \textbf{(2-c)\&(2-d)} Different Sample Ratios.}
    \label{fig:ablation_study}
\end{figure}

\begin{table}[ht]
\centering
\caption{Online RL results on hard dataset \textbf{DAPO-MATH-17K}. Sample ratio is 30\%. }
\begin{tabular}{ccccccc}
\toprule
Model                       & Dataset               & Method            & GRPO & RLOO & DAPO & REINFORCE++ \\ \midrule
\multirow{6}{*}{Qwen2.5-7B} & \multirow{3}{*}{MATH} & Random            &  68.2 &  72.0 &  70.6  &  71.7   \\
                            &                       & Full              &  \textbf{73.2} &  \textbf{74.8}  &  \textbf{73.5} & 73.9     \\
                            &                    & $r_{pb}^{online}$ &     72.9  &  74.3 &  73.2  &  \textbf{73.9} \\ \cmidrule{2-7}
                            
                           & \multirow{3}{*}{GSM8K} & Random            &  88.1 &  90.1    &   88.5  &   89.4      \\
                            &                       & Full              &  91.5   & 92.0   &   \textbf{91.4}   &   91.7     \\
                            &                       & $r_{pb}^{online}$ &  \textbf{91.7}  &   \textbf{92.1}  &   91.0   & 91.4 \\ 
                            \bottomrule         
\end{tabular}
\label{tab:diff-algo}
\end{table}

\begin{table}[ht]
\centering
\caption{Impact on different RLVR algorithms.}

\begin{tabular}{c|c|c|c|c|c|c}
\toprule
Model      & Dataset       & Random & Full & PPL  & ENT  & $r_{pb}^{online}$ \\ \midrule
Qwen2.5-7B & DAPO-MATH-17K & 27.1   & \textbf{34.0} & 33.0 & 32.1 & 33.8 \textcolor{blue}{(+6.7\%)}              \\ \bottomrule
\end{tabular}
\label{tab:hard-data}
\end{table}

\textbf{Sensitivity of $\gamma$ in Equation \ref{equ:online}.}
We explored the impact of different $\gamma$ values, ranging from $\{0.1, 0.5, 1.0, 1.5, 2.0\}$, as illustrated in Figures \ref{fig:ablation_study} (1-c) \& (1-d). The results show that our approach favors smaller $\gamma$ values, such as 0.1. With larger $\gamma$, performance is lower and does not match that of Full, although it still surpasses the online random selection baseline.

\textbf{Different Sample Ratios.} While our main experiments used a sampling ratio of 30\%, here we examine how different sampling ratios affect the consistency criterion under both online and offline settings, as shown in Figures \ref{fig:ablation_study} (2-c) \& (2-d). Our results show that with smaller ratios, such as 10\%, the reasoning performance of the model decreases noteblely. For example, compared to Full, performance drops by about 1.8\% (MATH) and 1.5\% (GSM8K) in the online setting, and by about 3.1\% and 2.4\% in the offline setting. This indicates that too little training data can seriously impair RL training. On the other hand, increasing the ratio to 50\% does not provide a clear improvement and approaches the full data result. These findings highlight the importance of selecting a proper query sampling strategy for RL-based reasoning training.

\textbf{Different RLVR Algorithms.}\label{sec:diff-rl}
Our main experiments adopt GRPO as the default RLVR algorithm. Here, we further assess whether consistency sampling benefits other RLVR algorithms under the online setting. Results in Table \ref{tab:diff-algo} demonstrate that, with a 30\% sample ratio, consistency sampling reliably improves the reasoning performance of all tested algorithms, achieving results comparable to Full.

\textbf{Hard Training Data.} We also increased the difficulty of the mathematical tasks by validating the importance of consistency sampling on the DAPO-MATH-17K dataset\footnote{\hyperref[DAPO-MATH-17K]{http://huggingface.co/datasets/BytedTsinghua-SIA/DAPO-Math-17k}}, shown in Table \ref{tab:hard-data}. Even with a 30\% sampling ratio, we observed similar findings as in the main experiments. This confirms that uncertainty consistency query selection is equally effective for more challenging mathematical tasks.

\textbf{Selected by Highest Objective Uncertainty.}  In the online setting, if we select the top p\% samples with the highest objective uncertainty within a batch, these samples are very likely to have rewards of zero under RLVR algorithms. According to the RLVR loss in Equation \ref{equ:rlvr}, such samples will contribute no gradient signal, severely hindering model optimization. So we only consider it in offline setting. In the offline setting, we compare two selection strategies on MATH dataset: a) Selecting the top 30\% samples with the highest objective uncertainty (Top 30\% Hard); b) Selecting 30\% samples according to the offline metric $ r_{pb} $.
The result is shown in Table \ref{tab:high-obj-uncertainty}.

\begin{table}[ht]
\centering
\caption{Training on highest objective uncertainty dataset (Top Hard) vs uncertainty consistency  dataset ($r_{pb}$) in the offline setting.}
\label{tab:high-obj-uncertainty}
\begin{tabular}{ccc}
\toprule
Model                & $r_{pb}$ & Top Hard \\
\midrule
Qwen2.5-7B           & \textbf{72.1}     & 68.3     \\
Qwen2.5-3B           & \textbf{63.3}     & 57.8     \\
Llama3.1-8B-Instruct & \textbf{51.5}     & 50.4    \\ \bottomrule
\end{tabular}
\end{table}
The result demonstrates that filtering samples solely based on  objective uncertainty will induce difficulty imbalance in the training set, which can affect the model’s generalization ability. Therefore, our framework explicitly considers uncertainty consistency, rather than relying on a single notion of uncertainty.

\subsection{Discussion}

We further analyzed how consistency sampling affects response length and policy entropy during RL training on the Qwen2.5-7B MATH task, as illustrated in Figures \ref{fig:ablation_study} (2-a) \& (2-b):

\textbf{Response Length.} We find that consistency samples help maintain the model's response length at a level comparable to RL with the full dataset. This suggests the difficulty of the selected samples matches that of the full dataset, while random sampling may pick samples that are too easy and less beneficial for improving reasoning ability.

\textbf{Entropy.}
With full-data RL, the model's entropy drops rapidly and is noticeably lower than other methods at the early stage of training. This phenomenon limits the model's exploration ability and restricts gains in reasoning ability, even with more data. In contrast, consistency sampling maintains higher entropy in the early phase, promoting better exploration and improving the sample efficiency of RL training.

\section{Conclusion}

This work mainly explores what kind of data is more valuable for RLVR training and how to achieve the reasoning effect of full-data RL with less data. Key findings are: 1) Classic active learning strategies underperform full-data RLVR because they ignore objective uncertainty, i.e., the probability that the model answers correctly. 2) We introduce an offline uncertainty consistency metric, which is the point-biserial correlation between correctness and model perplexity; 3) Because of limited sampling and dynamically shifting output distributions, we estimate the online uncertainty consistency metric from normalized advantages and current subjective uncertainty  4) We prove that online metric is strictly negatively correlated with its offline counterpart and maximizing online uncertainty consistency is equivalent to maximizing decrease of sample uncertainty, providing a principled selection criterion. Experiments show that selecting by low uncertainty consistency metric already surpasses random and classic AL baselines, while online selection with only 30\% of the data matches or exceeds the accuracy of training on the full dataset. Our consistency-driven query selection thus offers a scalable path to data-efficient RL for complex reasoning tasks.


\newpage
\bibliography{arxiv}
\bibliographystyle{iclr2026_conference}

\newpage
\appendix
\section{Appendix}

\subsection{Proof}
\subsubsection{Proof for Theorem \ref{theroem:neg_cov}} \label{sec:neg_cov_proof}

\textbf{Theorem} \ref{theroem:neg_cov}(Negative Correlation between $r_{pb}$ and $r_{pb}^{\text{online}}$).\textit{ For the same model $\pi_{\theta}$, the covariance between $r_{pb}$ and $r_{pb}^{\text{online}}$ is less than zero, i.e., $\mathrm{Cov}(r_{pb}, r_{pb}^{\text{online}}) < 0$.}

\begin{proof}
We first define $r'_{pb} = (\bar{U}_1 - \bar{U}_0)\sqrt{\frac{K_0K_1}{K^2}}$. Since the variance $s_K > 0$, $\mathrm{Cov}(r_{pb}, r_{pb}^{\text{online}})$ and $\mathrm{Cov}(r'_{pb}, r_{pb}^{\text{online}})$ have the same sign. We now show that $\mathrm{Cov}(r'_{pb}, r_{pb}^{\text{online}})<0$. For clarity of exposition, we abstract the above theorem as follows:  

\begin{tcolorbox}[colback=gray!15!white, colframe=gray!80!black, boxrule=0.5pt, arc=4pt]
\textit{Let $U$ be a random variable such that $P(U > 1)=1$, and consider i.i.d $K$ samples $\{u_i\}_{i=1}^K$.  
Let $P = r'_{pb} = \sum_{i=1}^K c_i u_i$, $Q = r^{\text{online}}_{pb} = \sum_{i=1}^K d_i \frac{1}{u_i}$,  
where $c_i d_i > 0 $ for any $i\in [K]$. Then, we have $\mathrm{Cov}(P, Q) < 0$.}
\end{tcolorbox}

First, expand the expectation of the product:
\begin{align*}
\mathbb{E}[PQ] &= \mathbb{E}\left[\left(\sum_{i=1}^K c_i u_i\right)\left(\sum_{j=1}^K d_j \frac{1}{u_j}\right)\right] \\
    &= \sum_{i=1}^K \sum_{j=1}^K c_i d_j \mathbb{E}[u_i \cdot 1/u_j].
\end{align*}
For $i = j,$ $\mathbb{E}[u_i \cdot 1/u_i] = 1$.  
For $i \neq j$, if $u_i$ and $u_j$ are independent samples from $U$, then $\mathbb{E}[u_i \cdot 1/u_j] = \mathbb{E}[u_i] \cdot \mathbb{E}[1/u_j]$.

So,
\begin{align*}
\mathbb{E}[PQ] &= \sum_{i=1}^K c_i d_i \cdot 1
    + \sum_{i \neq j} c_i d_j \mathbb{E}[u_i]\mathbb{E}[1/u_j] \\
    &= \sum_{i=1}^K c_i d_i
    + \left(\sum_{i=1}^K c_i \mathbb{E}[u_i]\right)\left(\sum_{j=1}^K d_j \mathbb{E}[1/u_j]\right)
    - \sum_{i=1}^K c_i d_i \mathbb{E}[u_i]\mathbb{E}[1/u_i]
\end{align*}

The expectations of $P$ and $Q$ are
\[
\mathbb{E}[P] = \sum_{i=1}^K c_i \mathbb{E}[u_i], \qquad \mathbb{E}[Q] = \sum_{j=1}^K d_j \mathbb{E}[1/u_j].
\]
So,
\[
\mathbb{E}[P]\mathbb{E}[Q] = \left(\sum_{i=1}^K c_i \mathbb{E}[u_i]\right)\left(\sum_{j=1}^K d_j \mathbb{E}[1/u_j]\right)
\]

Subtracting, the covariance becomes:
\begin{align*}
\mathrm{Cov}(P,Q) 
    &= \mathbb{E}[PQ] - \mathbb{E}[P]\mathbb{E}[Q] \\
    &= \sum_{i=1}^K c_i d_i
    - \sum_{i=1}^K c_i d_i \mathbb{E}[u_i]\mathbb{E}[1/u_i]
\end{align*}

So,
\[
\mathrm{Cov}(P,Q) = \sum_{i=1}^K c_i d_i \left( 1 - \mathbb{E}[u_i]\mathbb{E}[1/u_i] \right)
\]

Now, because $u_i > 1$ almost surely, we know $\mathbb{E}[u_i] > 1$ and $\mathbb{E}[1/u_i] < 1$. More importantly, by Jensen's inequality or the Cauchy-Schwarz inequality,
\[
\mathbb{E}[u_i] \mathbb{E}[1/u_i] \geq 1
\]
with equality only if $u_i$ is constant.  
Therefore,
\[
1 - \mathbb{E}[u_i] \mathbb{E}[1/u_i] < 0
\]
and, since $\sum_{i=1}^K c_i d_i > 0$, it follows that
\[
\mathrm{Cov}(P, Q) < 0.
\]

So, $\mathrm{Cov}(r_{pb}, r_{pb}^{online}) < 0.$

\end{proof}
\subsubsection{Proof for Theorem \ref{theorem:equivalent}}\label{sec:equivalent_proof}
\textbf{Theorem} \ref{theorem:equivalent} (Equivalent between Maximizing Decrease in Sample Uncertainty and Maximizing $r_{pb}^{online}$).
\textit{Suppose  $U(x;\theta) = \sum_{j=1}^K U_j^{\theta}$ denotes the subjective uncertainty for sample $x$. Under  sample gradient orthogonality assumption (Assumption $\ref{ass_1}$) and bounded gradient norm assumption (Assumption $\ref{ass_2}$), in one optimization step of an on-policy RL algorithm (e.g., GRPO), selecting samples in the minibatch with largest $r_{pb}^{online}$ can maximize the decrease in sample uncertainty .}


First, assumption \ref{ass_2} is common and mild. Now we explain why assumption \ref{ass_1} is reasonable in practice. We use Qwen2.5-0.5B to perform eight inference runs on the same question. For each response, we compute the sample derivative of the uncertainty with respect to the parameters and the normalized gradient inner products for each pair of derivative. As Figure \ref{fig:orthogonal_heatmap} shown, apart from the diagonal, the gradient inner products between different responses are close to zero. This indicates that, although the samples are generated for the same question, the uncertainty gradients in the high-dimensional parameter space remain approximately orthogonal. Next, we begin to prove Theorem \ref{theorem:equivalent}.

\begin{figure}[ht]
    \centering
    \includegraphics[width=0.75\linewidth]{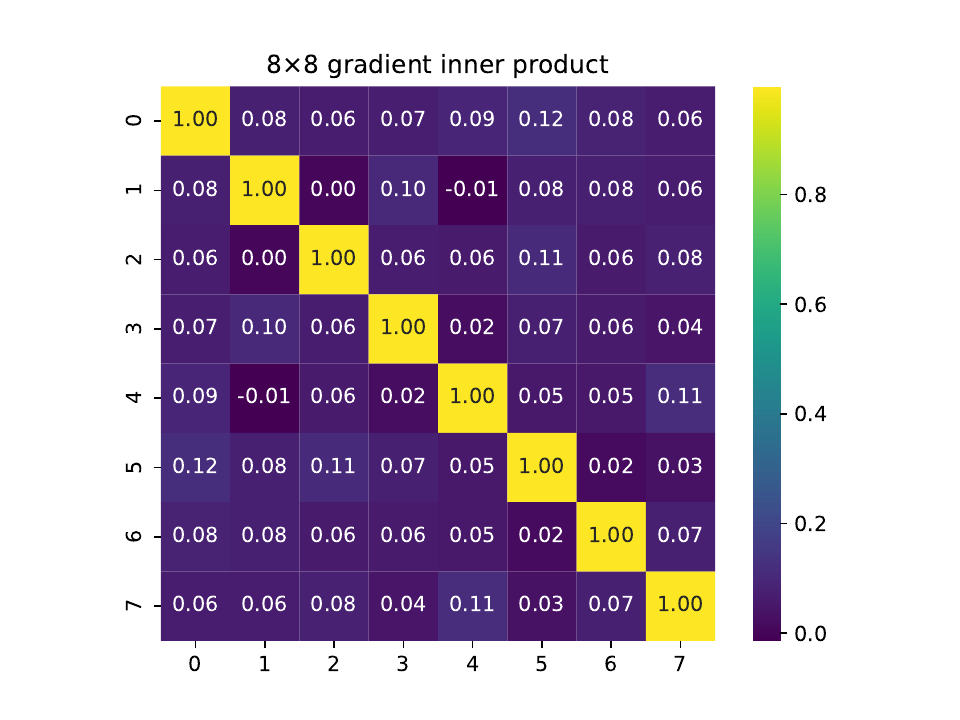}
    \caption{Gradient inner product heatmap. }
    \label{fig:orthogonal_heatmap}
\end{figure}

\begin{proof}
For any query $x$, we have $K$ responses $\{y_i\}_{i=1}^K$ and set $\gamma=\frac{M^2}{m^2}$. And according to on-policy RL algorithm (e.g., GRPO), the loss is defined as follows:
\begin{align*}
    \mathcal{L}^{\theta} & = - \frac{1}{K}\sum_{i=1}^K \hat{A}_i \frac{1}{|y_i|}\sum_{t=1}^{|y_i|}\log \pi_{\theta}(y_t|x,y_{<t}) \\
    &=  \frac{1}{K}\sum_{i=1}^K \hat{A}_i \ln e^{- \frac{1}{|y_i|}\sum_{t=1}^{|y_i|}\log \pi_{\theta}(y_t|x,y_{<t})} \\
    & = \frac{1}{K}\sum_{i=1}^K \hat{A}_i \ln U^{\theta}_i. \tag*{Equation \ref{equ:ppl}}\\
\end{align*}
Taking the derivative of the above equation with respect to $\theta$, we have:
\begin{align*}
    \nabla_{\theta}\mathcal{L}^{\theta} = \frac{1}{K}\sum_{i=1}^K \frac{\hat{A}_i}{U^{\theta}_i} \nabla_{\theta}U^{\theta}_i.
\end{align*}
According to Gradient Decent, the parameters in the next step is:
\begin{align*}
    \theta' = \theta - \eta \nabla_{\theta}\mathcal{L}^{\theta},
\end{align*}
where $\eta$ is learning rate. the decrease in sample uncertainty after this update is:
\begin{align*}
    \Delta U(x) &= U(x;\theta') -  U(x;\theta) \\
                &\approx -\eta \nabla_{\theta}U(x)^{T}\nabla_{\theta}\mathcal{L}^{\theta} \tag*{First-order Taylor Estimation} \\
                &= - \eta \left( \sum_{i=1}^K \nabla_{\theta}U_i^{\theta}\right)\left(\frac{1}{K}\sum_{i=1}^K \frac{\hat{A}_i}{U^{\theta}_i} \nabla_{\theta}U^{\theta}_i\right) \\
                &= -\frac{\eta}{K} \sum_{i=1}^K  \frac{\hat{A}_i}{U^{\theta}_i}  \sum_{j=1}^K <\nabla_{\theta}U^{\theta}_i,\nabla_{\theta}U^{\theta}_j> \\
                &= -\frac{\eta}{K}\sum_{i=1}^K  \frac{\hat{A}_i}{U^{\theta}_i} ||\nabla_{\theta}U^{\theta}_i||_2^2 \tag*{Assumption \ref{ass_1}}\\
                &= -\frac{\eta}{K}\left(\sum_{\hat{A}_i>0} \frac{\hat{A}_i}{U^{\theta}_i} ||\nabla_{\theta}U^{\theta}_i||_2^2 + \sum_{\hat{A}_i<0} \frac{\hat{A}_i}{U^{\theta}_i} ||\nabla_{\theta}U^{\theta}_i||_2^2\right) \\
                &\leq -\frac{\eta}{K}\left(m^2 \sum_{\hat{A}_i>0} \frac{\hat{A}_i}{U^{\theta}_i} + M^2 \sum_{\hat{A}_i<0} \frac{\hat{A}_i}{U^{\theta}_i}\right)  \tag*{Assumption \ref{ass_2}}\\
                &= -\frac{\eta m^2}{K} \left(\sum_{\hat{A}_i>0} \frac{\hat{A}_i}{U^{\theta}_i} + \gamma \sum_{\hat{A}_i<0} \frac{\hat{A}_i}{U^{\theta}_i}\right) \\
                &= -\eta m^2 r_{pb}^{online}
\end{align*}
So, maximize the decrease in sample uncertainty is equivalent to maximize  $r_{pb}^{online}$.
\end{proof}

\section{Margin Score and Entropy Estimation}\label{sec:more_estimation}

Similarly, we can use the Margin Score (MS) or Entropy (ENT) ~\citep{traditional_uncertainty} to represent the subjective uncertainty of LLMs:
\begin{align}
    MS_{k}^{(i)} = \frac{1}{|y_k^{(i)}|}\sum_{t=0}^{|y_k^{(i)}|} \left(\pi_{ref}(y_{m,t}|x^{(i)},y_{k,<t}^{(i)})-\pi_{ref}(y_{s,t}|x^{(i)},y_{k,<t}^{(i)})\right),
\end{align}
\begin{align}
 ENT_{k}^{(i)}  = \frac{1}{|y_k^{(i)}|}\sum_{t=0}^{|y_k^{(i)}|}  \mathcal{H}(\cdot|x^{(i)}y_{k,<t}^{(i)}).
 \label{equ:entropy}
\end{align}
Here, $y_{m,t}$ and $y_{s,t}$ denote the tokens with the highest and second-highest probabilities at position $t$ in the current sequence, respectively. $\mathcal{H}(\cdot|x,y)$ represents the entropy at each position of the current sequence. A larger MS or a smaller ENT reflects greater subjective uncertainty of the LLMs.

\section{Task Prompt}\label{sec:prompt}
\begin{tcolorbox}[colback=yellow!10, colframe=gray, title=MATH Task Prompt, boxrule=1.2pt, arc=3pt]
[User]

$<$Question$>$

Let's think step by step and output the final answer within \textbackslash\textbackslash boxed\{\}.
\end{tcolorbox}

\begin{tcolorbox}[colback=yellow!10, colframe=gray, title=GSM8K Task Prompt, boxrule=1.2pt, arc=3pt]
[User]

$<$Question$>$

Let\'s think step by step and output the final answer after \#\#\#\#.
\end{tcolorbox}

\begin{tcolorbox}[colback=yellow!10, colframe=gray, title=AskLLM~\citep{askllm} Prompt, boxrule=1.2pt, arc=3pt]

\#\#\#

Question

\#\#\#
\\
\\
Does the previous reasoning question demarcated within \#\#\# and \#\#\# contain informative signal for reasoning reinforcement learning training ? An informative datapoint should be well-formatted, contain some usable knowledge of the world, and strictly NOT have any harmful, racist, sexist, etc. content. This  reasoning question should have a clear answer, and you should consider it solvable for you, while also ensuring that it is not an overly simple question.
\\
\\

OPTIONS: 

- yes 

- no

\end{tcolorbox}

\section{Warm Up Experiment Setup}\label{sec:warmup_setup}
In Section \ref{sec:intro}, we assess classic AL strategies in the offline reasoning RL setting. We check Full , Random , uncertainty-based methods (PPL, Entropy~\citep{traditional_uncertainty}), feature-sapce coverage methods (K-means, K-center~\citep{core-set}) and LLM prompting base methods (AskLLM~\citep{askllm}) using GRPO on Qwen2.5-0.5B~\citep{qwen2_5_tech_report} in MATH~\citep{math_data} dataset. The detailed experimental setup is as follows:

\begin{itemize}
    \item \textbf{Full:} Training is conducted on the entire 7,500 samples from the MATH dataset.
    \item \textbf{Random:} 10\% of the training data are randomly sampled for training.
    \item \textbf{PPL:} Training samples are selected as the top 10\% of the training data with the highest average perplexity (Equation \ref{equ:ppl}).
    \item \textbf{Entropy:} Training samples are selected as the top 10\% of the training data with the highest average entropy (Equation \ref{equ:entropy}).
    \item  \textbf{K-means:} After obtaining vector representations from the Qwen3-8B-Embedding~\citep{qwen3_embbeding} model for each query, k-means++~\citep{kmeans++} initialization and k-means clustering are applied, and 10\% of the samples are uniformly drawn from each cluster for training.
    \item \textbf{K-center:} After obtaining vector representations from the Qwen3-8B-Embedding model~\citep{qwen3_embbeding}, the K-center~\citep{core-set} algorithm is applied to select the top 10\% of samples for training.
    \item \textbf{AskLLM:} Using the AskLLM prompt (See Appendix \ref{sec:prompt}), the probability of the token "yes" appearing in the model’s response is recorded, and the top 10\% of samples with the highest "yes" probability are selected for training.
\end{itemize}

All the hyperparameters in the warm up experiment is the same as those in main experiment (See Section \ref{sec:exp_setup}. Besides, every experiment is conduct 5 times using different random seed and we report its mean and standard deviation of greedy accuracy  on 5000 MATH test dataset.
\section{The Use of Large Language Models (LLMs)}

The Abstract, Introduction, and Method sections of the paper are polished with the assistance of a large language model.

\end{document}